\newtheorem{lemma}{Lemma}
\newtheorem{theorem}{Theorem}
\newtheorem{definition}{Definition}
\begin{document}

\begin{frontmatter}

\title{A binary differential evolution algorithm learning from explored solutions}

\author[WHUT,WHU]{Yu Chen\corref{cor1}}
\ead{chymath@gmail.com}
\author[WHU]{Weicheng Xie}
\author[WHU]{Xiufen Zou}

\address[WHUT]{School of Science, Wuhan University of Technology, Wuhan, 430070, China}
\address[WHU]{School of Mathematics and Statistics, Wuhan University, Wuhan, 430072, China}

\cortext[cor1]{Corresponding author. Tel:86-27-87108027}

\begin{abstract}
  Although real-coded differential evolution (DE) algorithms can perform well on continuous optimization problems (CoOPs), it is still a challenging task to design an efficient binary-coded DE algorithm. Inspired by the learning mechanism of particle swarm optimization (PSO) algorithms, we propose a binary learning differential evolution (BLDE) algorithm that can efficiently locate the global optimal solutions by learning from the last population. Then, we theoretically prove the global convergence of BLDE, and compare it with some existing binary-coded evolutionary algorithms (EAs) via numerical experiments. Numerical results show that BLDE is competitive to the compared EAs, and meanwhile, further study is performed via the change curves of a renewal metric and a refinement metric to investigate why BLDE cannot outperform some compared EAs for several selected benchmark problems. Finally, we employ BLDE solving the unit commitment problem (UCP) in power systems to show its applicability in practical problems.

\end{abstract}

\begin{keyword}
 Binary differential evolution algorithm, Convergence in probability,  Renewal metric, Refinement metric.
\end{keyword}

\end{frontmatter}

\section{Introduction}
\label{sec_1}
\subsection{Background}
Differential evolution (DE) \cite{Storn1997}, a competitive evolutionary algorithm emerging more than a decade ago, has been widely utilized in the science and engineering fields \cite{Price2005,Das2011}. The simple and straightforward evolving mechanisms of DE endow it with powerful capability of solving continuous optimization problems (CoOPs), however, hamper its applications on discrete optimization problems (DOPs).

To take full advantage of the superiority of mutations in classic DE algorithms, Pampar\'{a} and Engelbrecht \cite{Pampara2006} introduced a trigonometric generating function to transform the real-coded individuals of DE into binary strings, and proposed an angle modulated differential evolution (AMDE) algorithm for DOPs. Compared with the binary differential evolution (BDE) algorithms that directly manipulate binary strings,  AMDE was much slower but outperformed  BDE algorithmsx  with respect to accuracy of the obtained solutions \cite{Engelbrecht2007}. Meanwhile, Gong and Tuson proposed a binary DE algorithm by forma analysis \cite{Gong2007}, but it cannot perform well on binary constraint satisfaction problems due to its weak exploration ability \cite{Yang2008}. Trying to simulate the operation mode of the continuous DE mutation, Kashan {\em et al.} \cite{Kashan2013} design a dissimilarity based differential evolution (DisDE) algorithm incorporating a measure of dissimilarity in mutation. Numerical results show that DisDE is competitive to some existing binary-coded evolutionary algorithms (EAs).

Moreover, the performances of BDE algorithms can also be improved by incorporating recombination operators of other EAs. Hota and Pat \cite{Hota2010} proposed an adaptive quantum-inspired differential evolution algorithm (AQDE) applying quantum computing techniques, while He and Han \cite{He2007} introduced the negative selection in artificial immune systems to obtain an artificial immune system based differential evolution (AIS-DE) algorithm.  With respect to the fact that the logical operations introduced in AIS-DE tends to produce ``1'' bits with increasing probability, Wu and Tseng \cite{Wu2010} proposed an modified binary differential evolution strategy to improve the performance of BDE algorithms on topology optimization of structures.

\subsection{Motivation and Contribution}
Existing researches tried to incorporate the recombination strategies of various EAs to get efficient BDEs for DOPs, whereas there are still some points to be improved:
\begin{itemize}
\item AMDE \cite{Pampara2006} has to transform real values to binary strings, which leads to the explosion of computation cost for function evaluations. Meanwhile, the mathematical properties of the transformation function can also influence its performances on various DOPs;
\item BDE algorithms directly manipulating bit-strings, such as binDE \cite{Gong2007}, AIS-DE \cite{He2007} and MBDE \cite{Wu2010}, etc., cannot effectively imitate the mutation mechanism of continuous DE algorithms. Thus, they cannot perform well on high-dimensional DOPs due to their weak exploration abilities;
\item DisDE \cite{Kashan2013}, which incorporates a dissimilarity metric in the mutation operator, has to solve a minimization problem during the mutation process. As a consequence, the computation complexity of DisDE is considerably high.
\end{itemize}

Generally, it is a challenging task to design an efficient BDE algorithm perfectly addressing the aforementioned points. Recently, variants of the particle swarm optimization (PSO) algorithm \cite{Kennedy1995} have been successfully utilized in real applications \cite{Eberhart2001,Banks2007,Poli2007,Banks2008,Kennedy2010}. Although DE algorithms perform better than PSO algorithms in some real world applications \cite{Vesterstrom2004,Rekanos2008,Ponsich2011}, it is still promising to improve DE by incorporating PSO in the evolving process \cite{Das2005,Moore2006,Omran2009}. Considering that the learning mechanism of PSO can accelerate the convergence of populations, we propose a hybrid binary-coded evolutionary algorithm learning from the last population, named as the binary learning differential evolution (BLDE) algorithm. In BLDE, the searching process of population is guided by the renewed information of individuals, the dissimilarity between individuals and the best explored solution in the population. By this means, BLDE can performance well on DOPs.

The remainder of the paper is structured as follows. Section 2 presents a description of BLDE, and its global convergence is theoretically proved in Section 3. Then, in Section 4 BLDE is compared with some existing algorithms by numerical results. To  test performance of BLDE on real-life problems, we employ it to solve the unit commitment problem (UCP) in Section 5. Finally, discussions and conclusions are presented in Section 6.

\section{The binary learning differential evolution algorithm}

\subsection{Framework of the binary learning differential evolution algorithm}

\begin{algorithm}[htb!]
\caption{The binary learning differential evolution (BLDE) algorithm}  \label{BLDE}
\begin{algorithmic}[1]
\STATE Randomly generate two populations $\mathbf X^{(1)}$ and $\mathbf A^{(1)}$ of $\mu$ individuals; Set $t:=1$;
\WHILE{the stop criterion is no satisfied}
\STATE Let $\mathbf x_{gb}=(x_{gb,1},\dots,x_{gb,n})\triangleq\arg\max\limits_{\mathbf x\in \mathbf X^{(t)}}\{f(\mathbf x)\}$;
\FORALL{$\mathbf w\in\mathbf X^{(t)}$}
\STATE Randomly select $\mathbf x=(x_1,\dots,x_n)$ and $\mathbf y=(y_1,\dots,y_n)$ from $\mathbf X^{(t)}$, as well as $\mathbf z=(z_1,\dots,z_n)$ from $\mathbf A^{(t)}$;
\STATE $\mathbf {tx}=(tx_{1},\dots,tx_{n})\triangleq\arg\max\{f(\mathbf y),f(\mathbf z)\}$;
\FOR{$j=1,2, \cdots, n$ }
\IF{$y_j= z_j$}
  \IF{$x_{gb,j}\neq x_{j}$}
    \STATE $tx_{j}= x_{gb,j}$;
  \ELSE
     \IF{$rand(0,1)\leq p$}
       \STATE
             $tx_{j}=\left\{ \begin{aligned}&0&\quad &\mbox{with probability } \frac{1}{2};\\&1&\quad &\mbox{otherwise.}\\ \end{aligned}\right. $
     \ENDIF
  \ENDIF
\ENDIF
\ENDFOR
\IF {$f(\mathbf {tx})\ge f(\mathbf w)$}
 \STATE  $\mathbf w=\mathbf {tx}$;
\ENDIF
\ENDFOR
\STATE $t:=t+1$;
\STATE $\mathbf A^{(t)}=\mathbf X^{(t-1)}$;
\ENDWHILE
\end{algorithmic}
\end{algorithm}

For a binary maximization problem (BOP) \footnote{When a CoOP is considered, the real-value variables can be coded as bit-strings, and consequently, a binary optimization problem is constructed to be solved by binary-coded evolutionary algorithms.}
 \begin{equation}\label{OP}
 \max_{\mathbf x\in S}\quad f(\mathbf x)=f(x_1,\dots,x_n),\quad S\subset\{0,1\}^n,
 \end{equation}
 the BLDE algorithm illustrated by Algorithm \ref{BLDE} possesses two collections of $\mu$ solutions, the population $\mathbf X^{(t)}$ and the archive $\mathbf A^{(t)}$. At the first generation, the population $\mathbf X^{(1)}$ and the archive $\mathbf A^{(1)}$  are generated randomly. Then, repeat the following operations until the stopping criterion is satisfied.

 For each individual $\mathbf w\in\mathbf X^{(t)}$ a trial solution is generated by three randomly selected individuals $\mathbf x,\,\,\mathbf y\in \mathbf X^{(t)}$ and $\mathbf z\in \mathbf A^{(t)}$. At first, initialize the trial individual $\mathbf {tx}=\{tx_1,\dots,tx_n\}$ as the winner of two individuals $\mathbf y\in \mathbf X^{(t)}$ and $\mathbf z\in \mathbf A^{(t)}$. $\forall\,j\in\{1,2,\dots,n\}$, if $\mathbf y$ and $\mathbf z$ coincide on the $j^{th}$ bit, the $j^{th}$ bit of $\mathbf {tx}$ is changed as follows.
\begin{itemize}
\item If the $j^{th}$ bit of $\mathbf x$ differs from that of $\mathbf x_{gb}$, $tx_j$ is set to be $x_{gb,j}$, the $j^{th}$ bit of $\mathbf x_{gb}$;
\item otherwise, $tx_j$ is randomly mutated with a preset probability $p$.
\end{itemize}
Then, replace $\mathbf w$ with $\mathbf{tx}$ if $f(\mathbf{tx})\ge f(\mathbf w)$. After the update of population $\mathbf X^{(t)}$ is completed, set $t=t+1$ and $\mathbf A^{(t)}=\mathbf X^{(t-1)}$.

\subsection{The positive functions of the learning scheme}
Generally speaking, the trial solution $\mathbf{tx}$ is generated by three randomly selected individuals. Meanwhile, it also incorporates conditional learning strategies in the mutation process.
\begin{itemize}
\item By randomly selecting $\mathbf y\in\mathbf X^{(t)}$, BLDE can learn from any member in the present population. Because the elitism strategy is employed in the BLDE algorithm, BLDE could learn from any {\em pbest} solution in the population, unlike that particles in PSO can only learn from their own {\em pbest} individuals.
\item By randomly selecting $\mathbf z\in\mathbf A^{(t)}$, BLDE can learn from any member in the last population. At the early stage of the iteration process, individuals in the population $\mathbf X^{(t)}$ are usually different with those in $\mathbf A^{(t)}=\mathbf X^{(t-1)}$. Combined with the first strategy, this scheme actually enhances the exploration ability of the population, and to some extent, accelerates convergence of the population.
\item When bits of $\mathbf y$ coincide with the corresponding bits of $\mathbf z$, trial solutions learn from the {\em gbest} on condition that randomly selected $\mathbf x\in\mathbf X^{(t)}$ differs from $\mathbf x_{gb}$ on the these bits. This scheme imitates the learning strategy of PSO, and meanwhile, can also prevent the population from being governed by dominating patterns, because the increase of probability $\mathbf P\{x_{gb,j}=x_j\}$ will lead to the random mutation performed on $\mathbf {tx}$, preventing the duplicate of the dominating patterns in the population.
\end{itemize}

In PSO algorithms, each particle learns from the {\em pbest} (the best solution it has obtained so far) and the {\em gbest} (the best solution the swarm has obtained so far), and particles in the swarm only exchange information via the {\em gbest} solution. The simple and unconditional learning strategy of PSO  usually results in its fast convergence rate, however, sometimes leads to its premature convergence to local optima. The BLDE algorithm learning from $\mathbf X^{(t)}$ as well as $\mathbf A^{(t)}$ can explore the feasible region in a better way, and meanwhile, by conditionally learning from $\mathbf x_{gb}$ it will not be attracted by local optimal solutions.

\section{Convergence analysis of BLDE}
\label{subsec_2}
Denote $\mathbf x^*$ to be an optimal solution of BOP (\ref{OP}),
the global convergence of BLDE can be defined as follows.
\begin{definition}
\label{def_2}
Let $\{\mathbf X^{(t)},t=1,2,dots\}$ be the population sequence of BLDE. It is said to converge in probability to the optimal solution $\mathbf x^*$ of BOP (\ref{OP}), if it holds that
 $$\lim_{t\rightarrow \infty}\mathbf P\{\mathbf x^*\in \mathbf X^{(t)}\}=1.$$
\end{definition}
To confirm the global convergence of the proposed BLDE algorithm, we first show that any feasible solution can be generated with a positive probability.
\begin{lemma}
\label{lem_1}
In two generations, BLDE can generate any feasible solution of BOP (\ref{OP}) with a probability greater than or equal to a positive constant $c$.
\end{lemma}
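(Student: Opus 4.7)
The plan is to construct a chain of events over two consecutive generations whose joint probability is bounded below by a constant $c = c(\mu,n,p) > 0$, uniformly in the current populations and in the target $\mathbf{t} \in S$.

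First I would isolate a single-generation sufficient condition for the trial $\mathbf{tx}$ to equal a prescribed pattern $\mathbf{t}$. Inspecting lines 8--16 of Algorithm~\ref{BLDE}, whenever the two drawn individuals $\mathbf{y}$ and $\mathbf{z}$ share every bit, every coordinate enters the outer conditional $y_j = z_j$; and whenever additionally $\mathbf{x} = \mathbf{x}_{gb}$, every coordinate is then pushed into the probability-$p$ random mutation branch. Under this configuration the $j$-th bit of $\mathbf{tx}$ takes any prescribed value $t_j$ with probability at least $\min\{p/2,\,1-p/2\}$, since a firing mutation returns $t_j$ with probability $1/2$ and a non-firing one preserves $y_j$. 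Hence, conditional on the configuration,
$$\mathbf{P}\{\mathbf{tx} = \mathbf{t}\} \geq \bigl(\min\{p/2,\,1-p/2\}\bigr)^{n},$$
uniformly in $\mathbf{t}$. The choice $\mathbf{x} = \mathbf{x}_{gb}$ contributes a factor $\geq 1/\mu$; the bottleneck is $\mathbf{y} = \mathbf{z}$, because $\mathbf{y}$ is drawn from $\mathbf{X}^{(t+1)}$ while $\mathbf{z}$ is drawn from $\mathbf{A}^{(t+1)} = \mathbf{X}^{(t)}$, so the two populations must share at least one common bit pattern. This is what the preparatory first generation will secure.

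In the first generation I would force an overlap between $\mathbf{X}^{(t+1)}$ and $\mathbf{A}^{(t+1)}$ by showing that $\mathbf{x}_{gb}^{(t)}$ survives in $\mathbf{X}^{(t+1)}$ with positive probability. Focusing on the update of the position currently holding $\mathbf{x}_{gb}^{(t)}$, I would choose $\mathbf{x} = \mathbf{y} = \mathbf{x}_{gb}^{(t)}$ (joint probability $\geq 1/\mu^2$) and any $\mathbf{z}$. Elitism from the previous iterations ensures $f(\mathbf{z}) \leq f(\mathbf{x}_{gb}^{(t)})$, so the initial trial coincides with $\mathbf{x}_{gb}^{(t)}$; bits where $y_j \neq z_j$ then remain at $x_{gb,j}$, while bits where $y_j = z_j$ retain $x_{gb,j}$ with probability at least $1-p/2$ each. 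A bit-wise count gives $\mathbf{P}\{\mathbf{tx} = \mathbf{x}_{gb}^{(t)}\} \geq \mu^{-2}(1-p/2)^n$, and since the fitness tie triggers a no-op replacement, $\mathbf{x}_{gb}^{(t)}$ still lies in $\mathbf{X}^{(t+1)}$; its membership in $\mathbf{A}^{(t+1)} = \mathbf{X}^{(t)}$ is automatic.

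In the second generation I would invoke the single-generation analysis with $\mathbf{y} = \mathbf{z} = \mathbf{x}_{gb}^{(t)}$ (joint probability $\geq 1/\mu^2$, thanks to the overlap just secured) and $\mathbf{x} = \mathbf{x}_{gb}^{(t+1)}$ (probability $\geq 1/\mu$). The conditional probability of producing $\mathbf{tx} = \mathbf{t}$ is then at least $\mu^{-3}\bigl(\min\{p/2,1-p/2\}\bigr)^n$, and multiplying with the first-generation bound yields the required constant $c$. The main technical obstacle is the first-generation step: without any structural hypothesis on the fitness landscape one must engineer an overlap between $\mathbf{X}^{(t+1)}$ and $\mathbf{A}^{(t+1)}$, and the device of producing a trial that matches $\mathbf{x}_{gb}^{(t)}$ bit-for-bit sidesteps this because the resulting elitist comparison leaves the incumbent best in place regardless of the other fitness values.
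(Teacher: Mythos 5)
Your argument is correct and follows essentially the same two-generation strategy as the paper's proof: first secure a common individual between $\mathbf X^{(t+1)}$ and $\mathbf A^{(t+1)}$ by keeping one individual unchanged for a generation, then force $\mathbf y=\mathbf z$ and $\mathbf x=\mathbf x_{gb}$ so that every bit enters the probability-$p$ random-mutation branch and reaches any prescribed value with probability at least $p/2$. The only difference is organizational: the paper splits into cases according to whether the two populations already overlap, whereas you always engineer the overlap via the survival of $\mathbf x_{gb}^{(t)}$, which removes the case distinction (while sharing the paper's implicit reliance on elitism, valid from the second generation on, to guarantee the trial is initialized to the intended individual).
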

\begin{proof}
Denote $\mathbf x^{(t)}(i)=(x^{(t)}_1(i),\dots,x^{(t)}_n(i))$ and $\mathbf a^{(t)}(i)=(a^{(t)}_1(i),\dots,a^{(t)}_n(i))$ to be the $i^{th}$ individuals of $\mathbf X^{(t)}$ and $\mathbf A^{(t)}$, respectively. Let $\mathbf {tx}^{(t)}(i)=({tx}_1^{(t)}(i),\dots,{tx}_n^{(t)}(i))$ be the $i^{th}$ trial individual generated at the $t^{th}$ generation. There are two different cases to be investigated.
\begin{enumerate}
\item If $\mathbf X^{(t)}$ and $\mathbf A^{(t)}$ include at least one common individual, the probability $\mathbf P\{\mathbf y=\mathbf z\}$ is greater than or equal to $\frac{1}{\mu^2}$, where $\mathbf y\in\mathbf X^{(t)}$ and $\mathbf z\in\mathbf A^{(t)}$ are selected randomly from $\mathbf X^{(t)}$ and $\mathbf A^{(t)}$, respectively. Then, the random mutation illustrated by Lines 12 - 14 of Algorithm 1 will be activated with probability $\frac{1}{\mu}$, which is the minimum probability of selecting $\mathbf x$ to be $\mathbf x^{(t)}_{gb}$, the best individual in the present population $\mathbf X^{(t)}$. For this case, both $\mathbf P\{tx_j=0\}$ and $\mathbf P\{tx_j=1\}$ are greater than or equal to $\frac{p}{2\mu^3}$. Then, any feasible solution can be generated with a positive probability greater than or equal to $\left(\frac{p}{2\mu^3}\right)^n$.
\item If all individuals in $\mathbf X^{(t)}$ differ from those in $\mathbf A^{(t)}$, two different solutions $\mathbf y\in\mathbf X^{(t)}$ and $\mathbf z\in\mathbf A^{(t)}$ are located at the same index $i_0$ with probability
     $$\mathbf P\{\mathbf y=\mathbf x^{(t)}(i_0),\mathbf z=\mathbf a^{(t)}(i_0)\}=\frac{1}{\mu^2}.$$

     Since $\mathbf y\neq\mathbf z$, $I_1=\{j;y_j\neq z_j\}$ is not empty. Moreover, the elitism update strategy ensure that the trial individual $\mathbf{tx}^{(t)}(i_0)$ is initialized to be $\mathbf{tx}^{(t)}(i_0)=\mathbf y$. Then, $$tx_j^{(t)}(i_0)=y_j=x_j^{(t)}(i_0),\quad \forall\,j\in I_1,$$
     and $\forall\,j\notin I_1$, $\mathbf{tx}^{(t)}(i_0)$ will keep unchanged with a probability greater than $\frac{1-p}{\mu}$, the probability of selecting $\mathbf x=\mathbf x_{gb}$ and not activating the mutation illustrated by Lines 12-14 of Algorithm 1. That is to say, the probability of generating a trial individual $\mathbf{tx}^{(t)}(i_0)=\mathbf y=\mathbf x^{(t)}(i_0)$ is greater than or equal to $\frac{1-p}{\mu^3}$.

     For this case, the $i_0^{th}$ individual of the population will keep unchanged at the $t^{th}$ generation, and at the next generation (generation $t+1$), $\mathbf x^{(t+1)}(i_0)$ will coincide with $\mathbf a^{(t+1)}(i_0)$. Then, it comes to the first case, and consequently, the trial individual $\mathbf {tx}^{(t+1)}(i)$ can reach any feasible solution with a positive probability greater than or equal to $\left(\frac{p}{2\mu^3}\right)^n$. For this case, any feasible solution can be generated with a probability greater than $\frac{1-p}{\mu^3}\left(\frac{p}{2\mu^3}\right)^n$.

\end{enumerate}
In conclusion, in two generations the trial individual $\mathbf {tx}$ will reach any feasible solution  with a probability greater than or equal to a positive constant $c$, where $c=\frac{1-p}{\mu^3}\left(\frac{p}{2\mu^3}\right)^n$.
\end{proof}

\begin{theorem}
\label{the_1}
BLDE converges in probability to the optimal solution $\mathbf x^*$ of OP (\ref{OP}).
\end{theorem}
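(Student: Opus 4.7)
The plan is to combine Lemma \ref{lem_1} with the elitist acceptance rule $f(\mathbf{tx})\ge f(\mathbf w)$ so that the probability of not having $\mathbf x^*$ in the population decays geometrically in the number of generations.

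First I would verify that the update step is absorbing at the optimum. Once some slot $i$ of $\mathbf X^{(t)}$ is occupied by $\mathbf x^*$, any incoming trial $\mathbf{tx}$ satisfies $f(\mathbf{tx})\le f(\mathbf x^*)$, so the replacement $\mathbf w\leftarrow \mathbf{tx}$ can only be triggered by another global optimum. Hence, by tracking the fittest incumbent, the event $\{\mathbf x^*\in\mathbf X^{(t)}\}$ is monotone non-decreasing in $t$, and the sequence $\mathbf P\{\mathbf x^*\in\mathbf X^{(t)}\}$ is nondecreasing.

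Second I would quantify how fast this absorbing event is reached. Lemma \ref{lem_1} states that within any two consecutive generations $t,t+1$, and for any fixed slot $i$, the trial individual at that slot equals the specific feasible point $\mathbf x^*$ with probability at least $c=\frac{1-p}{\mu^3}\left(\frac{p}{2\mu^3}\right)^n>0$. Whenever such a trial is generated, the inequality $f(\mathbf x^*)\ge f(\mathbf w)$ holds trivially, so the acceptance test forces $\mathbf x^*$ into the population by generation $t+2$ at the latest.

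Third I would iterate. Setting $q_t=\mathbf P\{\mathbf x^*\notin\mathbf X^{(t)}\}$ and conditioning on the state at generation $t$, the independence of the random choices used across generations together with the two steps above yields
$$q_{t+2}\le (1-c)\,q_t.$$
Unrolling from $t=1$ gives $q_{2k+1}\le(1-c)^k$ and $q_{2k}\le(1-c)^{k-1}$, both of which tend to $0$ as $k\to\infty$. This is precisely $\lim_{t\to\infty}\mathbf P\{\mathbf x^*\in\mathbf X^{(t)}\}=1$ required by Definition \ref{def_2}.

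The main obstacle is making the absorbing-state claim airtight when BOP (\ref{OP}) admits several global optima, since a trial with $f(\mathbf{tx})=f(\mathbf x^*)$ but $\mathbf{tx}\ne\mathbf x^*$ would overwrite $\mathbf x^*$ under the $\ge$ test. The cleanest remedies are to assume uniqueness of $\mathbf x^*$, to reinterpret Definition \ref{def_2} as ``some global optimum lies in $\mathbf X^{(t)}$'', or to relabel $\mathbf x^*$ as whichever optimum first enters the population; with any of these minor adjustments, the recurrence and its geometric decay go through unchanged, and the rest of the argument is a routine Borel--Cantelli-type iteration.
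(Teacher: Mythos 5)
Your proof follows essentially the same route as the paper: Lemma~\ref{lem_1} supplies the uniform lower bound $c$ on the two-generation probability of producing $\mathbf x^*$, elitist acceptance makes the event $\{\mathbf x^*\in\mathbf X^{(t)}\}$ absorbing, and the recurrence $q_{t+2}\le(1-c)\,q_t$ is unrolled to zero along even and odd subsequences. You are in fact somewhat more careful than the paper's own argument, which leaves the absorption step implicit, contains a typographical slip ($(1-p)$ where $(1-P)$ is meant), and does not address the multiple-optima caveat you raise.
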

\begin{proof}
 Lemma 1 shows that there exists a  positive number $c>0$ such that  $$\mathbf P \{\mathbf x^*\in\mathbf{X}^{(t+2)}\mid\mathbf x^*\notin\mathbf{X}^{(t)}\}\ge c,\quad \forall\,t\ge 1.$$
 Denoting $$P=\mathbf P \{\mathbf x^*\in\mathbf{X}^{(t+2)}\mid\mathbf x^*\notin\mathbf{X}^{(t)}\},$$
we know that $$\mathbf P\{\mathbf x^*\notin\mathbf X^{(t+2)}|\mathbf x^*\notin\mathbf X^{(t)}\}=1-P.$$
Thus,
\begin{equation*}
\mathbf P\{\mathbf x^*\in X^{(t)}\}=1-\mathbf P\{\mathbf x^*\notin X^{(t)}\}
=1-\mathbf P\{\mathbf x^*\notin\mathbf X^{(t)}|\mathbf x^*\notin\mathbf X^{(t-2)}\}.
\end{equation*}
If $t$ is even,
\begin{eqnarray*}
\lim_{t\rightarrow \infty}\mathbf P\{\mathbf x^*\in X^{(t)}\}&=&1-\lim_{t\rightarrow \infty}\mathbf P\{\mathbf x^*\notin X^{(t)}\}\\
&=&1-\lim_{t\rightarrow \infty}(1-p)^{t/2}\mathbf P\{\mathbf x^*\notin\mathbf X^{(0)}\}\\
&=&1;
\end{eqnarray*}
otherwise,
\begin{eqnarray*}
\lim_{t\rightarrow \infty}\mathbf P\{\mathbf x^*\in X^{(t)}\}&=&1-\lim_{t\rightarrow \infty}\mathbf P\{\mathbf x^*\notin X^{(t)}\}\\
&=&1-\lim_{t\rightarrow \infty}(1-p)^{(t-1)/2}\mathbf P\{\mathbf x^*\notin\mathbf X^{(1)}\}\\
&=&1.
\end{eqnarray*}
In conclusion, BLDE converges in probability to the optimal solution $\mathbf x^*$ of BOP (\ref{OP}).
\end{proof}

\section{Numerical experiments}
Although Theorem 1 validates the global convergence of the BLDE algorithm, its convergence characteristics have not been investigated. In this section, we try to show its competitiveness to existing algorithms by numerical experiments.

\subsection{Benchmark problems }
Tab. \ref{tab_0} illustrates the selected benchmark problems, properties and settings of which are listed in Tab. \ref{tab_1}. As for the continuous problems $P_3-P_7$, all real variables are coded by bit-strings. For the multiple knapsack problem (MKP) $P_8$, we test BLDE via five test instances characterized by data files ``weing6.dat, sent02.dat, weish14.dat, weish22.dat and weish30.dat'' \cite{Website}, termed as $P_{8-1}$, $P_{8-2}$, $P_{8-3}$, $P_{8-4}$ and $P_{8-5}$, respectively. When a candidate solution is evaluated, it is penalized by $PT(\mathbf x)=\frac{1+\max_j p_i}{\min_{i,j} w_{i,j}}\cdot \max_i\{\max_j (w_{i,j}x_j-W_i),0\}$ \cite{Uyar2005}.

\begin{table}[htb!]
 \renewcommand{\arraystretch}{1.5}
 \caption{Descriptions of the selected benchmark problems.}\label{tab_0}
 \centering
 \begin{tabular}{cl}
 \hline\hline
 Problems & Descriptions.  \\
 \hline
 $P_1$: &
  $\max\quad f_{1}({\bf x})=\sum\limits_{i=1}^n\prod\limits_{j=1}^i x_j,\quad x_j\in\{0,1\},j=1,\dots, n.$\\

$P_2$: &
  $Long \,\,Path \,\,Problem: \,\,\,Root2path$ \cite{Horn1994} \\

$P_3$: &
  $\max\quad f_{3}({\bf x})=-\max\limits_{i=1,\dots,m}|x_i|,\quad x_i\in [-10,10],i=1,\dots,D. $\\

$P_4$: &
  $\max\quad f_{4}({\bf x})=-\frac{1}{4000}\sum\limits_{i=1}^D(x_i-100)^2+\prod\limits_{i=1}^D cos(\frac{x_i-100}{\sqrt{i}})-1,\quad x_i\in [-300,300],i=1,\dots,D.$\\

$P_5$: &
  $\max\quad f_{5}({\bf x})=-\sum\limits_{i=1}^D ix_i^4-rand[0,1),\quad x_i\in [-1.28,1.28],i=1,\dots,D.$\\

$P_6$: &
  $\max\quad f_{6}({\bf x})=-\sum\limits_{i=1}^{D-1}(100(x_{i+1}-x_i^2)^2+(1-x_i)^2),\quad x_i\in [-2.048,2.048],i=1,\dots,D.$\\

$P_7$: &
  $\max\quad f_{7}({\bf x})=-20+20\exp(-0.2\sqrt{\frac{1}{m}\sum\limits_{i=1}^m x_i^2})+\exp(\frac{1}{m}\sum\limits_{i=1}^m cos(2\pi x_i))-e,\quad x_i\in [-30,30],i=1,\dots,D.$\\

$P_8:$ &
  $\max \quad  f_8(\mathbf x)=\max \sum\limits_{j=1}^n p_jx_j,
  s.t. \quad  \sum\limits_{j=1}^n w_{i,j}x_j\le W_i,\quad i=1,\cdots,m,
   x_j\in \{0,1\},\quad j=1,\dots,n.$\\
 \hline
\end{tabular}
\end{table}

\begin{table}[htbp]
 \renewcommand{\arraystretch}{1.5}
 \caption{Properties and settings of the benchmark problems}\label{tab_1}
 \centering
 \begin{tabular}{cccccc}
 \hline\hline
  Problem& Binary/Real & Dimension & Bit-length& Constraints & Maximum Objective Value\\
 \hline
 $P_1$& Binary &30 &30 & -& 30\\
 $P_2$& Binary &29 &29 & -& 49992\\
$P_3$& Real &30 &180 & -& 0\\
$P_4$& Real &30 &480 & -& 0\\
$P_5$& Real &30 &240 & -& 0\\
$P_6$& Real &30 &300 & -& 0\\
$P_7$& Real &30 &300 & -& 0\\
$P_{8-1}$& Binary &28 &28& 2 & 130623\\
$P_{8-2}$& Binary &60 &60& 30 & 8722\\
$P_{8-3}$& Binary &60 &60& 5 & 6954\\
$P_{8-4}$& Binary &80 &80& 5 & 8947\\
$P_{8-5}$& Binary &90 &90& 5 & 11191\\
 \hline
\end{tabular}
\end{table}

\subsection{Parameter settings}
For numerical comparisons, BLDE is compared with the angle modulated particle swarm optimization (AMPSO) \cite{Pampara2005}, the angle modulated differential evolution (AMDE) \cite{Pampara2006}, the dissimilarity artificial bee colony (DisABC) algorithm \cite{Kashan2012}, the binary particle swarm optimization (BPSO) algorithm \cite{Kennedy1997}, the binary differential evolution (binDE) \cite{Gong2007} algorithm and the self-adaptive quantum-inspired differential evolution (AQDE) algorithm\cite{Hota2010}. As is suggested by the designers of the algorithms, the parameters of AMPSO, AMDE, DisABC, BPSO, binDE, and AQDE, are listed in Table \ref{tab_2}. Prerun for BLDE shows that when the mutation ability $p$ is less than 0.05, its weak exploration ability leads to its premature to the local optima of multi-modal problems; while when $p$ is greater than $\min\{0.15,10/n\}$, it cannot efficiently exploit the local region of global optima. Thus, in this paper we set $p=\max\{0.05\min\{0.15,10/n\}\}$ to keep a balance between exploration and exploitation. All compared algorithms are tested with a population of size 50, and the results are compared after $300\times n$ FEs, except that numerical results are compared after $300\times n\times m$ function evaluations (FEs) for MKPs, where $n$ is the bitstring length, $m$ is the number of constraints for MKP.

\begin{table*}[htbp]
\centering
\renewcommand{\arraystretch}{1.5}
\caption{Parameter settings for the tested algorithms}\label{tab_2}
\begin{tabular}{cl}
\hline\hline
Algorithm & {Parameter settings}\\
  \hline
  AMPSO & {$c_1=1.496180, c_2=1.496180, \phi=0.729844, V_{max}=4.0.$
  } \\

  AMDE &   $CR=0.25, F=1.$
   \\
  DisABC & {$\phi_{max}=0.9,\phi_{min}=0.5,p_s=0.5,N_{local}=50,p_{local}=0.01.$ } \\

  BPSO &  {$C=2, V_{max}=6.0$. } \\

  binDE &  {$F=0.8, CR=0.5$. }  \\

  AQDE &  {$F=0.1*r_1*r_2,CR=0.5+0.0375*r_3$, $r_1,r_2\thicksim U(0,1)$, $r_3\thicksim N(0,1)$. } \\

  \textbf{BLDE} &  {$p=\max(0.05,\min(0.15,10/n))$.} \\
  \hline
\end{tabular}
\end{table*}

\subsection{Numerical comparisons}\label{SN}
Implemented by the MATLAB package, the compared algorithms are run on a PC
with a INTEL(R) CORE(R) CPU, running at 2.8GHZ with 4 GB RAM.  After 50 independent runs for each problem, the results are compared in Tab. \ref{tab_3} via the average best fitness (AveFit), the standard deviation of best fitness (StdDev), the success rate (SR) and the expected runtime (RunTime). Taking AveFit and StdDev as the sorting indexes, the overall ranks of the compared algorithms are list in Tab. \ref{tab_4}.

\begin{sidewaystable}[htbp]
\footnotesize
\caption{Numerical results of AMPSO,  AMDE,DisABC and BLDE on the 12 test problems. The best results for each problem are highlighted by boldface type.}\label{tab_3}
\centering
 \begin{tabular}{cccccccc}
 \hline\hline
\multirow{3}{*}{Problem}&AMPSO&AMDE&DisABC&BPSO&binDE&AQDE&BLDE \\
\cline{2-8}
&AveFit$\pm$ StdDev&AveFit$\pm$ StdDev&AveFit$\pm$ StdDev&AveFit$\pm$ StdDev&AveFit$\pm$ StdDev&AveFit$\pm$ StdDev&AveFit$\pm$ StdDev\\
&(SR,Runtime)&(SR,Runtime)&(SR,Runtime)&(SR,Runtime)&(SR,Runtime)&(SR,Runtime)&(SR,Runtime)\\
\hline
\multirow{2}{*}{$P_{1}$}&{\bf 3.00E+01$\pm$0.00E+00}&{\bf 3.00E+01$\pm$0.00E+00}&{\bf 3.00E+01$\pm$0.00E+00}&{\bf 3.00E+01$\pm$0.00E+00}&{2.94E+01$\pm$3.14E-01}&{2.34E+01$\pm$2.88E+00}&{\bf 3.00E+01$\pm$0.00E+00}\\
&({\bf 100}, 3.01E-01)&({\bf 100}, 2.78E-01)&({\bf 100}, 1.60E+01)&({\bf 100}, 2.95E-01)&({96}, 4.07E-01)&({4}, 2.44E-01)&({\bf 100}, {\bf 2.15E-01})\\
\hline
\multirow{2}{*}{$P_{2}$}&{\bf 5.0E+04$\pm$0.00E+00}&{5.0E+04$\pm$1.54E+02}&4.53E+04$\pm$7.19E+03&{3.96E+04$\pm$1.65E+04}&{4.52E+04$\pm$8.92E+03}&{3.46E+04$\pm$1.37E+04}&{5.00E+04$\pm$6.09E+01}\\
&(\textbf{100}, 2.34E+02)&({\bf 88}, 2.03E+02)&(34, 2.92E+02)&(66, 2.81E+02)&({40}, 2.79E+02)&({16}, 2.96E+02)&({96}, 3.07E+02)\\
\hline
\multirow{2}{*}{$P_{3}$}&-8.92E+00$\pm$2.15E+00&-5.48E+00$\pm$3.21E+00&-6.88E+00$\pm$2.86E-01&-4.88E+00$\pm$7.39E-01&-6.34E+00$\pm$3.04E-01&-6.55E+00$\pm$3.68E-01&{\bf -3.22E+00$\pm$8.74E-01}\\
&(0, {\bf 3.45E+02})&({\bf 2}, 3.47E+02)&(0, 3.75E+02)&(0, 3.53E+02)&(0, 3.53E+02)&(0, 3.55E+02)&(0, 3.53E+02)\\
\hline
\multirow{2}{*}{$P_{4}$}&-4.55E+01$\pm$3.53E+01&-1.12E+01$\pm$1.99E+01&-5.70E+01$\pm$5.48E+00&-6.18E+00$\pm$2.40E+00&-4.07E+01$\pm$4.27E+00&-1.57E+01$\pm$3.79E+00&{\bf -1.12E+00$\pm$1.10E-01}\\
&(0, {\bf 1.03E+03})&({\bf 48}, 1.04E+03)&(0, 1.21E+03)&(0, 1.06E+03)&(0, 1.04E+03)&(0, 1.05E+03)&(0, 1.05E+03)\\
\hline
\multirow{2}{*}{$P_{5}$}&-1.13E+01$\pm$1.15E+01&-1.27E+00$\pm$3.67E+00&-3.11E+01$\pm$5.55E+00&{\bf -1.90E-02$\pm$8.20E-03}&-2.35E+01$\pm$3.91E+00&-2.32E+01$\pm$4.37E+00&{-5.79E-02$\pm$2.24E-02}\\
&({\bf 22}, {\bf 4.72E+02})&({\bf 22}, 4.76E+02)&(0, 5.21E+02)&(10, 4.82E+02)&(0, 4.83E+02)&(0, 4.85E+02)&(0, 4.84E+02)\\
\hline
\multirow{2}{*}{$P_{6}$}&-2.94E+03$\pm$9.26E+02&-1.18E+02$\pm$3.51E+02&-4.23E+03$\pm$4.05E+02&-5.54E+02$\pm$2.82E+02&-3.58E+03$\pm$2.92E+02&-2.02E+03$\pm$4.17E+02&{\bf -4.55E+01$\pm$9.68E+01}\\
&(0, {\bf 6.37E+02})&({\bf 8}, 6.41E+02)&(0, 7.00E+02)&(0, 6.49E+02)&(0, 6.48E+02)&(0, 6.51E+02)&(0, 6.45E+02)\\
\hline
\multirow{2}{*}{$P_{7}$}&-7.87E+00$\pm$3.29E+00&-4.57E+00$\pm$2.84E+00&-1.10E+01$\pm$3.19E-01&{\bf -1.67E+00$\pm$5.40E-03}&-1.06E+01$\pm$2.74E-01&-1.00E+01$\pm$6.43E-01&{-1.93E+00$\pm$3.84E-02}\\
&(0, {\bf 6.02E+02})&(0, 6.08E+02)&(0, 6.72E+02)&(0, 6.22E+02)&(0, 6.20E+02)&(0, 6.19E+02)&(0, 6.20E+02)\\
\hline
\multirow{2}{*}{$P_{8-1}$}&1.21E+05$\pm$4.61E+03&1.23E+05$\pm$2.70E+03&1.28E+05$\pm$1.14E+03&1.29E+05$\pm$2.99E+03&{\bf 1.30E+05$\pm$2.04E+02}&1.30E+05$\pm$2.89E+02&1.28E+05$\pm$2.66E+03\\
&(0, 7.35E-01)&(0, {\bf 6.85E-01})&(2, 3.28E+00)&(18, 9.82E-01)&({\bf 52}, 1.25E+00)&(20, 9.39E-01)&(10, 8.97E-01)\\
\hline
\multirow{2}{*}{$P_{8-2}$}&7.62E+03$\pm$4.80E+02&8.02E+03$\pm$1.19E+02&8.49E+03$\pm$4.21E+01&8.66E+03$\pm$3.56E+01&{\bf 8.72E+03$\pm$4.45E+00}&8.70E+03$\pm$1.47E+01&8.70E+03$\pm$1.62E+01\\
&(0, 2.71E+01)&(0, {\bf 2.61E+01})&(0, 1.20E+02)&(0, 3.65E+01)&({\bf 84}, 4.41E+01)&(4, 3.43E+01)&(4, 3.25E+01)\\
\hline
\multirow{2}{*}{$P_{8-3}$}&5.30E+03$\pm$2.12E+02&5.24E+03$\pm$1.83E+02&6.01E+03$\pm$1.19E+01&6.87E+03$\pm$7.85E+01&{\bf 6.95E+03$\pm$0.00E+00}&6.84E+03$\pm$7.11E+01&6.93E+03$\pm$3.66E+01\\
&(0, 4.29E+00)&(0, {\bf 4.13E+00})&(0, 1.92E+01)&(26, 5.88E+00)&({\bf 100}, 7.16E+00)&(2, 5.51E+00)&(58, 5.23E+00)\\
\hline
\multirow{2}{*}{$P_{8-4}$}&6.52E+03$\pm$4.14E+02&6.43E+03$\pm$2.22E+02&7.19E+03$\pm$1.89E+02&8.81E+03$\pm$1.02E+02&8.71E+03$\pm$1.06E+02&8.70E+03$\pm$9.21E+01&{\bf 8.87E+03$\pm$5.43E+01}\\
&(0, 6.04E+00)&(0, {\bf 5.90E+00})&(0, 2.75E+01)&({\bf 8}, 8.31E+00)&(0, 1.01E+01)&(0, 7.73E+00)&(4, 7.28E+00)\\
\hline
\multirow{2}{*}{$P_{8-5}$}&8.10E+03$\pm$5.96E+02&8.37E+03$\pm$2.87E+02&9.33E+03$\pm$2.29E+02&1.11E+04$\pm$4.40E+01&1.09E+04$\pm$7.01E+01&1.10E+04$\pm$8.22E+01&{\bf 1.12E+04$\pm$1.86E+01}\\
&(0, 7.09E+00)&(0, {\bf 6.91E+00})&(0, 3.28E+01)&(2, 9.64E+00)&(0, 1.17E+01)&(0, 8.87E+00)&({\bf 6}, 8.29E+00)\\
\hline
\end{tabular}
\end{sidewaystable}

\begin{table}[htbp]
\renewcommand{\arraystretch}{1.0}
\caption{Ranks on the performances of  the compared algorithms for the selected benchmark problems.}\label{tab_4}
\centering
\begin{tabular}{cccccccc}
\hline\hline
Problem.&AMPSO&AMDE&DisABC&BPSO&binDE&AQDE&BLDE\\
\hline
$p_1$&1&1&1&1&6&7&1\\
$p_2$&1&3&4&6&5&7&2\\
$p_3$&7&3&6&2&4&5&1\\
$p_4$&5&2&6&7&4&3&1\\
$p_5$&4&3&7&1&6&5&2\\
$p_6$&5&2&7&3&6&4&1\\
$p_7$&7&3&6&1&5&4&2\\
$p_{8-1}$&7&6&5&3&1&2&4\\
$p_{8-2}$&7&6&5&4&1&2&3\\
$p_{8-3}$&6&7&5&3&1&4&2\\
$p_{8-4}$&6&7&5&2&3&4&1\\
$p_{8-5}$&7&6&5&2&4&3&1\\
$Average$&5.3&4.1&5.2&2.9&3.8&4.2&1.8\\
 \hline

\end{tabular}

\end{table}

Numerical results in Tab. \ref{tab_3} show that BLDE is generally competitive to the compared algorithms for the selected benchmark problems, which is also illustrated by Tab. \ref{tab_4}, where BLDE averagely ranks first for the benchmark problems. Meanwhile, because it contains no time-consuming operations, for most cases BLDE spends less CPU time for the selected benchmark problems. Considering that AveFit and StdDev are two overall statistical indexes of the numerical results, we also perform a Wilcoxon rank sum test \cite{Gibbons2003} with a significance level of 0.05 to compare performances of the tested algorithms, and the results are listed in Tab. \ref{tab_5}.

 \begin{sidewaystable}[htbp]
\renewcommand{\arraystretch}{1.0}
\caption{Wilcoxon rank sum tests of the compared algorithms on the benchmark problems. The notation $+(-)$ means the algorithm for comparison is significantly superior to (inferior to) BLDE with significance level 0.05; $\approx$ means the compared algorithm is not significantly different with BLDE.}\label{tab_5}
\centering
\begin{tabular}[\textwidth]{clcl}
  \hline\hline
Algorithm & HBPD & Algorithm & HBPD\\
 \hline
\multirow{3}{*}{AMPSO}&$+:\emptyset$&\multirow{3}{*}{AMDE}&$+:\emptyset$\\
&$\approx:P_{1},P_{2}$&&$\approx:P_{1},P_{2},P_4,P_5$\\
&$-:P_{3},P_{4},P_{5},P_{6},P_{7},P_{8-1},P_{8-2},P_{8-3},P_{8-4},P_{8-5}$&&$-:P_{3},P_{6},P_{7},P_{8-1},P_{8-2},P_{8-3},P_{8-4},P_{8-5}$\\
\\
\multirow{3}{*}{DisABC}&$+:\emptyset$&\multirow{3}{*}{BPSO}&$+:P_5,P_7$\\
&$\approx:P_{1},P_{8-1},$&&$\approx:P_{1},P_{8-1}$\\
&$-:P_{2},P_{3},P_{4},P_{5},P_{6},P_{7},P_{8-2},P_{8-3},P_{8-4},P_{8-5}$&&$-:P_{2},P_{3},P_{4},P_{6},P_{8-2},P_{8-3},P_{8-4},P_{8-5}$\\
\\
\multirow{3}{*}{binDE}&$+:P_{8-1},P_{8-2},P_{8-3}$&\multirow{3}{*}{AQDE}&$+:P_{8-1}$\\
&$\approx:P_{1}$&&$\approx:P_{8-2}$\\
&$-:P_{2},P_{3},P_{4},P_{5},P_{6},P_{7},P_{8-4},P_{8-5}$&&
$-:P_1,P_{2},P_{3},P_{4},P_{5},P_{6},P_{7},,P_{8-3},P_{8-4},P_{8-5}$\\
\hline
\end{tabular}

\end{sidewaystable}

The results of Wilcxon rank sum tests demonstrate that BPSO performs significantly better on $P_5$ and $P_7$, the nosiy quadric problem and the maximization problem of Ackley's function, respectively. Because BPSO imitates the evolving mechanisms of PSO by simultaneously changing all bits of the individuals, it can quickly converge to the global optimal solutions. However, BLDE sometimes mutates bit by bit, and consequently, its evolving process is more vulnerable to be influenced by noises and the multimodal landscapes of benchmark problems. Thus, BPSO also performs better than BLDE on $P_5$ and $P_7$. For similar reasons, BPSO outperform BLDE on $P_{8-1}$, a low-dimensional MKP.

Meanwhile, binDE obtains better results than BLDE on the low-dimensional MKPs $P_{8-1}-P_{8-3}$, but performs worse than BLDE on the other problems, which is attributed to the fact that the exploitation ability of binDE descend with the expansion of the searching space. Consequently, binDE cannot perform well on the high-dimensional problems. Similarly, AQDE, which is specially designed for Knapsack problems, only outperforms BLDE for the low-dimensional MKP $P_{8-1}$, and cannot perform better than BLDE for other selected benchmark problems.

\subsection{Further comparison on the exploration and exploitation abilities }

To further explore the underlying causes resulting in BLDE performing worse than the BPSO, binDE and AQDE on several given test problems, we try to investigate how their exploration and exploitation abilities change during the evolving process. Thus, a renewal metric and a refinement metric are defined to respectively quantify the exploration and exploitation abilities.

\begin{definition}
\label{def_3}
Denote the population of an EA at the $t^{th}$ generation to be $\mathbf X^{(t)}$, which consists of $\mu$ n-bit individuals. Let $HammDist(\mathbf x,\mathbf y)$ to be the Hamming distance between two binary vectors $\mathbf x$ and $\mathbf y$. The \textbf{renewal metric} of an EA at the $t^{th}$ generation is defined as
\begin{equation}
\label{plor}
\alpha(t)\triangleq \frac{1}{\mu\cdot n}\sum_{i=1}^{\mu}Ham(\mathbf x^{(t)}(i)-\mathbf {tx}^{(t)}(i)),
\end{equation}
where $\mathbf x^{(t)}(i)$ is the $i^{th}$ individual in $\mathbf X^{(t)}$, and $\mathbf {tx}^{(t)}(i)$ is the corresponding candidate solution.
The \textbf{refinement metric} of an EA at the $t^{th}$ generation is defined as
\begin{equation}
\label{ploi}
\beta(t)\triangleq \frac{1}{\mu\cdot n}\sum_{i=1}^{\mu}\left(n-Ham(\mathbf x^{(t)}(i)-\mathbf x_{gb}(t))\right),
\end{equation}
where $\mathbf x_{gb}(t)$ is the best explored solution before the $t^{th}$ generation.
\end{definition}

The Hamming distance between $\mathbf x^{(t)}(i)$ and the corresponding trial vector $\mathbf{tx}(i)$ denotes the the overall changes that is performed on the bit-string by the variation strategies. Accordingly, the average value over the whole population can indicate the overal changes of the population. Then, $\alpha(t)$ properly reveals the exploration abilities of EAs at generation $t$. Meanwhile, an EA which harbors a big value of $\beta(t)$ can intensely exploit the local region around the best explored solution $\mathbf x_{gb}$, and thus, it harbors powerful exploitation ability.

For the comparison, we illustrate the changing curves of the renewal metric and the refinement metric for BLDE, BPSO,  AQDE and binDE by Figure \ref{fig_1}. Fig.\ref{Fig.sub.1} and Fig.\ref{Fig.sub.2} show that when BPSO is employed to solve $P5$ and $P7$, the renewal metric quickly descend to about zero, and the refinement metric ascend to a high level, which demonstrates that the population of PSO quickly converges. Meanwhile, the diversity of the population rapidly descend to a low level, and the population focuses on local search around the obtained best solution. Since the intensity of noise in $P_5$ is small, the convergence of BPSO is not significantly influenced. For $P_7$, the massive local optimal solutions are regularly distributed in the feasible region, BPSO can also quickly locate the global optimal solution. However, BLDE tries to keep a balance between exploration and exploitation, and the bit-by-bit variation strategies make it more vulnerable to be frustrated by the noise of $P_5$ as well as the multi-modal landscape of $P_7$. As a consequence, BPSO performs better than BLDE on $P_5$ and $P_7$.

However, the local optimal solutions of MKPs are not regularly distributed. Thus, to efficiently explore the feasible regions, it is vital to keep a balance between exploration and exploitation. Figs. \ref{Fig.sub.3}, \ref{Fig.sub.4}, \ref{Fig.sub.5} and \ref{Fig.sub.6} demonstrate binDE and AQDE can keep a balance between exploration and exploitation for the compared algorithms. Thus, AQDE performs better than BLDE on the test problem $P_{8-1}$, and binDE performs better than BLDE on  $P_{8-1}$, $P_{8-2}$ and $P_{8-3}$.

\begin{figure}[htbp]
\centering
\subfigure[$P_5$ : BLDE vs. BPSO;]{
\label{Fig.sub.1}
\includegraphics[width=3.0in]{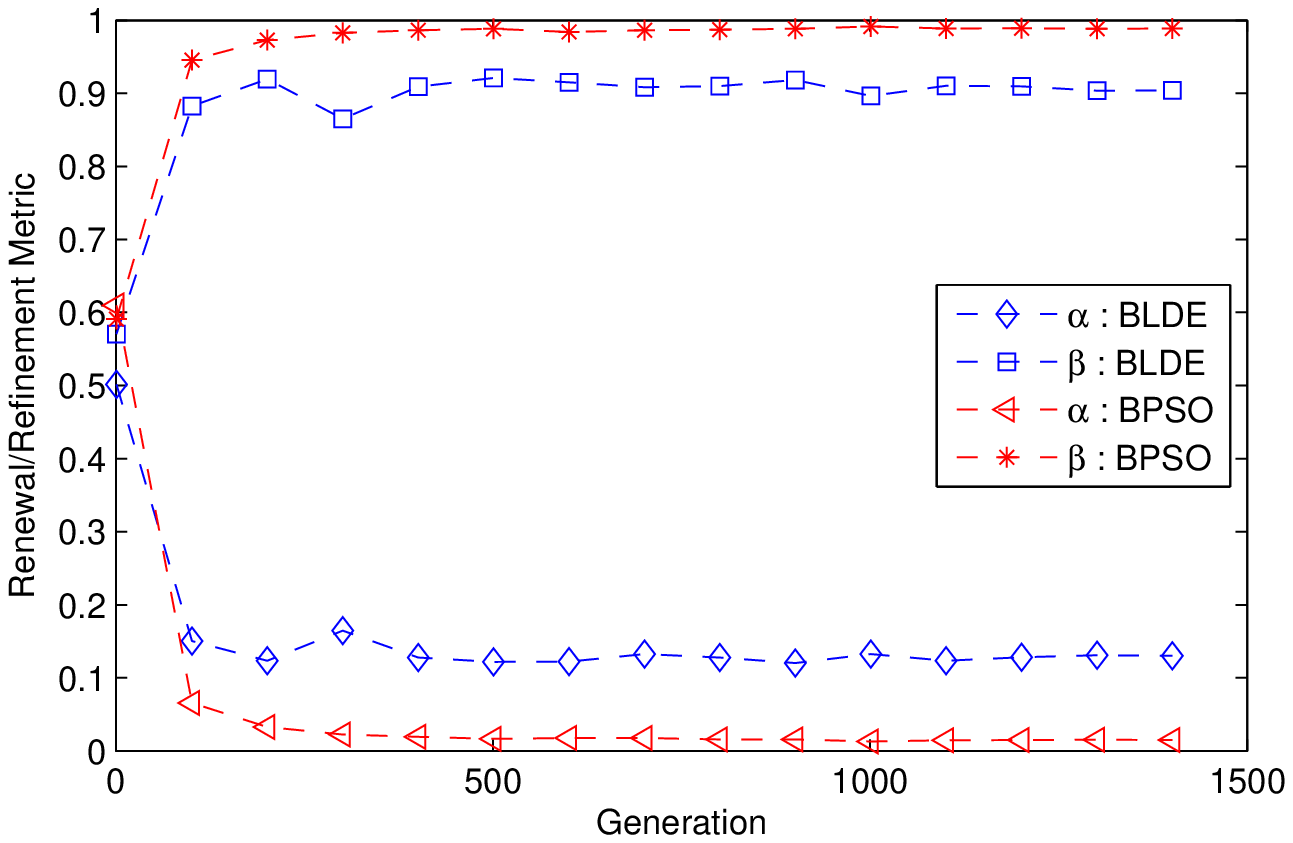}}
\subfigure[$P_7$: BLDE vs. BPSO;]{
\label{Fig.sub.2}
\includegraphics[width=3.0in]{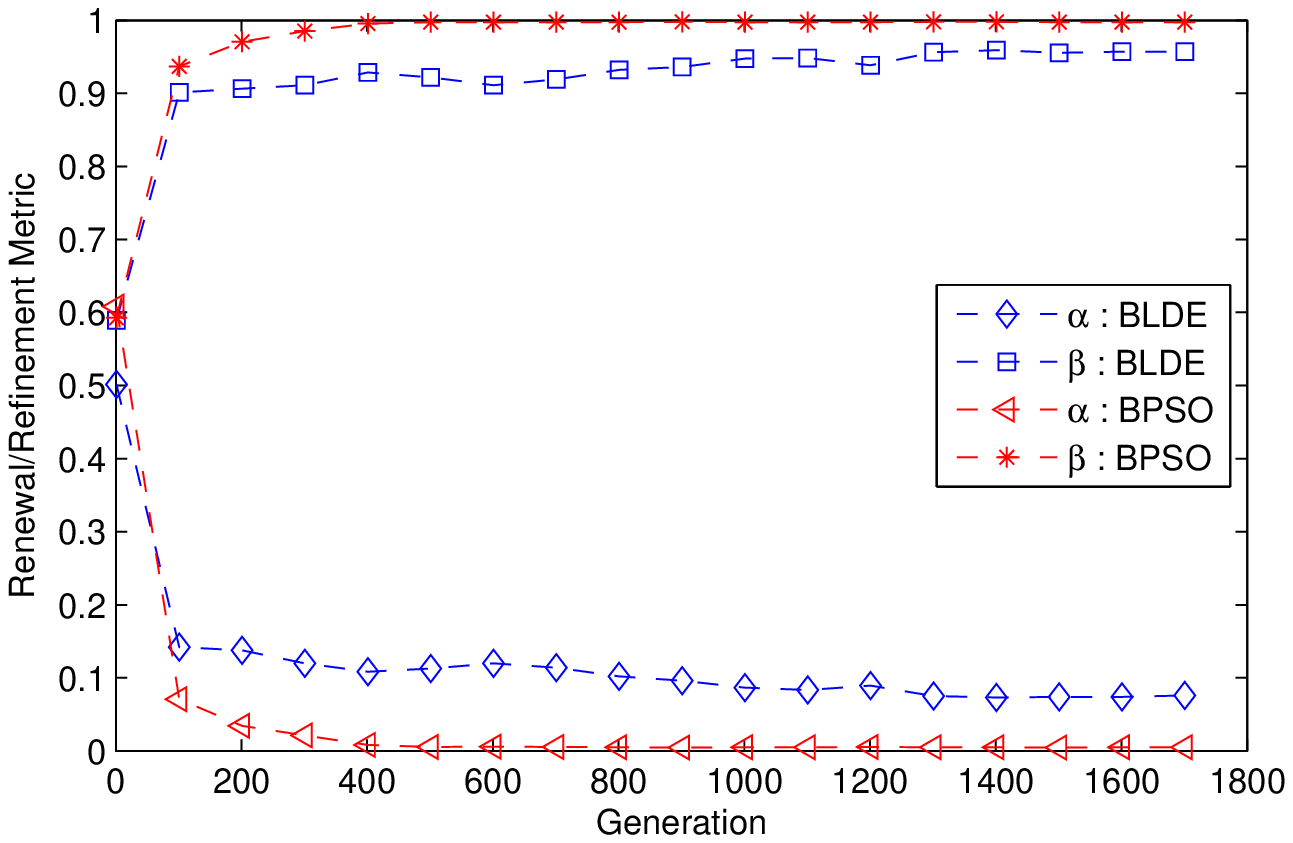}}\\

\subfigure[$P_{8-1}$: BLDE vs. AQDE;]{
\label{Fig.sub.3}
\includegraphics[width=3.0in]{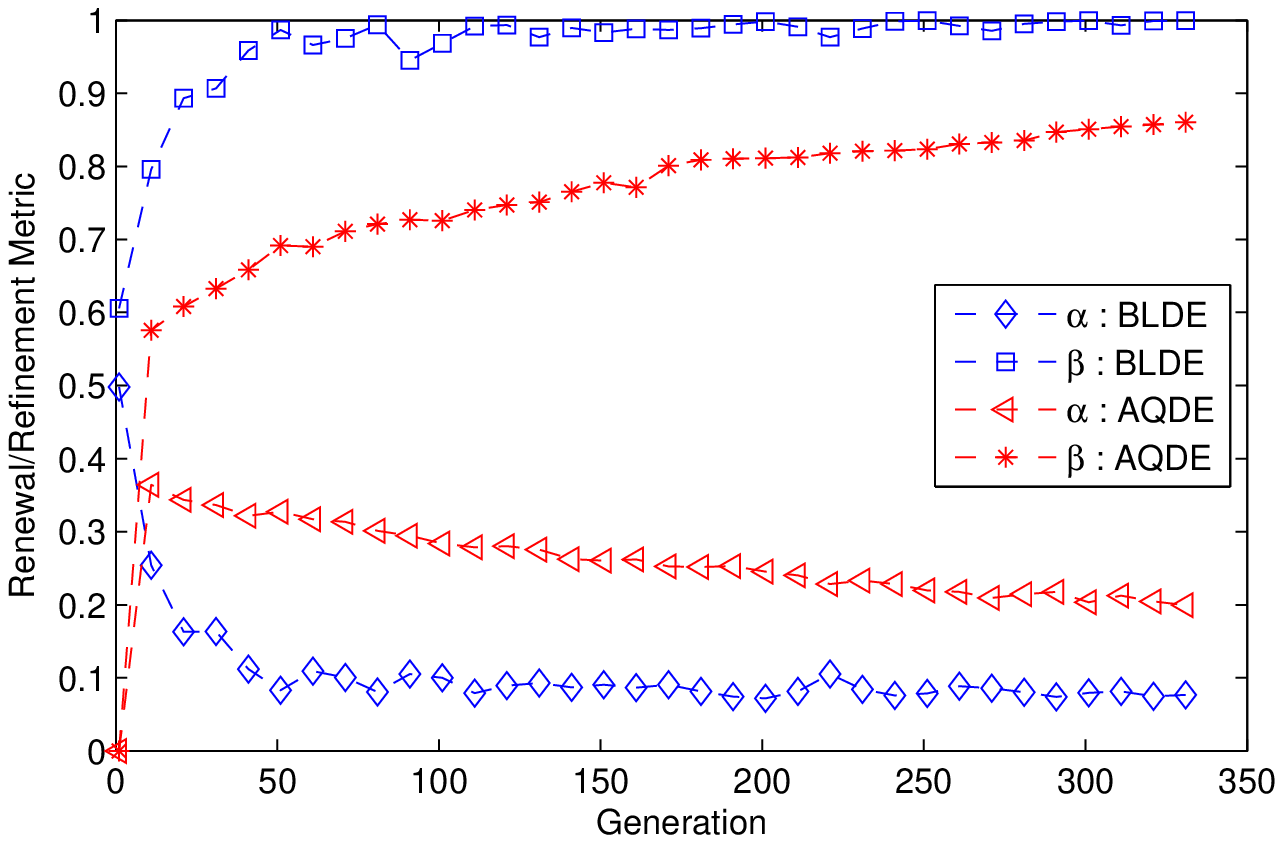}}
\subfigure[$P_{8-1}$: BLDE vs. binDE;]{
\label{Fig.sub.4}
\includegraphics[width=3.0in]{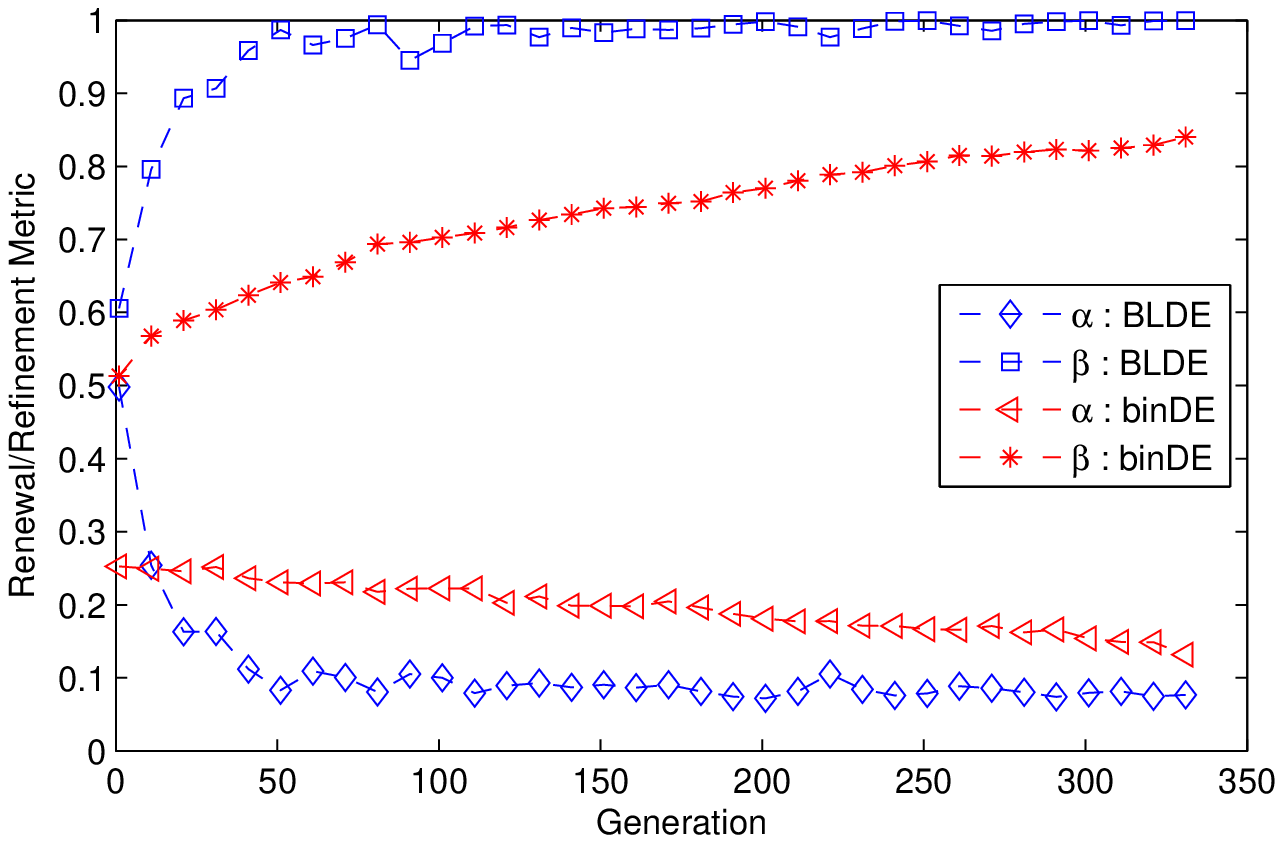}}\\

\subfigure[$P_{8-2}$: BLDE vs. binDE;]{
\label{Fig.sub.5}
\includegraphics[width=3.0in]{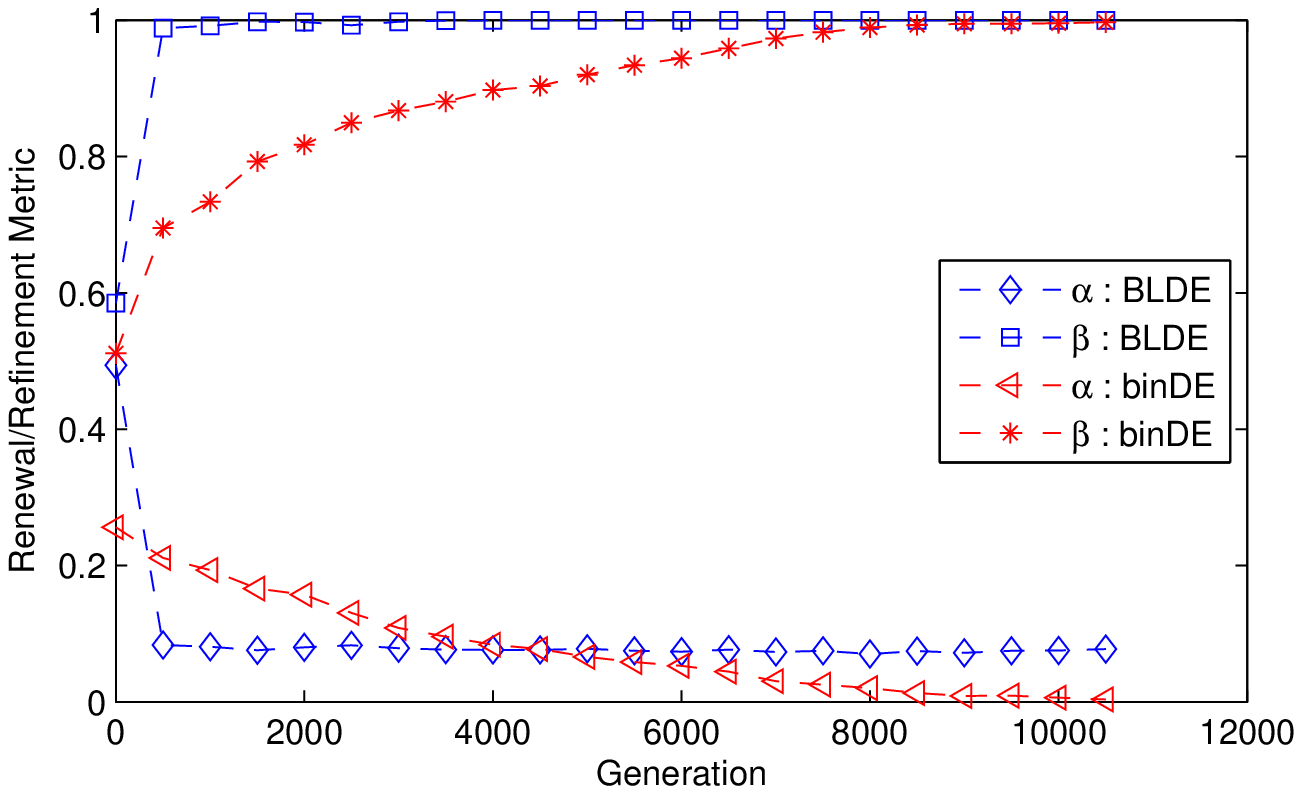}}
\subfigure[$P_{8-3}$: BLDE vs binDE.]{
\label{Fig.sub.6}
\includegraphics[width=3.0in]{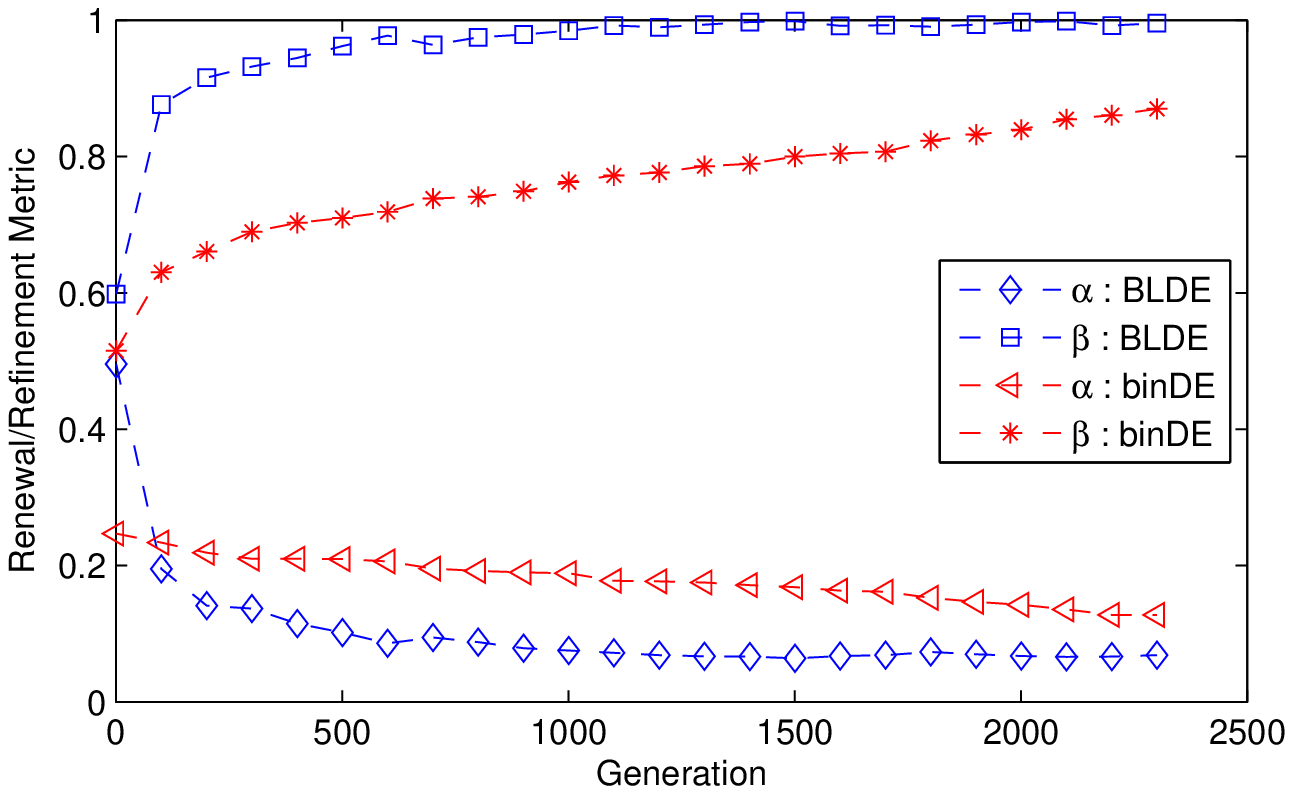}}\\
\caption{Comparisons of the renewal and refinement metrics for test problems $P_5$, $P_7$, $P_{8-1}$, $P_{8-2}$, $P_{8-3}$.}
\label{fig_1}
\end{figure}

\section{Performance of BLDE on the unit commitment problem}
In this section, we employ BLDE solving the unit commitment problem (UCP) in power systems. To minimize the production cost over a daily to weekly time horizon, UCP involves the optimum scheduling of power generating units as well as the determination of the optimum amounts of power to be generated by committed units\footnote{To compare with the work reported in \cite{Datta2012}, we employ similar notations and descriptions in this section.} \cite{Datta2012}. Thus, UCP is a mixed integer optimization problem, the decision variables of which are the binary string representing the on/off statuses of units and the real variables indicating the generated power of units.

\subsection{Objective function of UCP}
The objective of UCP is to minimize the total production cost
\begin{equation}
 F=\sum_{t=1}^T\sum_{i=1}^N\left[\phi_i(P_{it})\cdot u_{it}+\psi_{it}\cdot (1-u_{i,t-1})\cdot u_{i,t}\right]
\end{equation}
where $N$ is the number of units to be scheduled and T is the time horizon. When the $t^{th}$ unit is committed to generate power $P_{it}$ at time $t$, the binary variable $u_{it}$ is set to be 1; otherwise, $u_{it}=0$. The function $\phi_i(P_{it})$ represents the fuel cost of unit $i$ at time $t$, which is frequently approximated by
\begin{equation}
\phi_i(P_{it})=a_i+b_ip_{it}+c_iP^2_{it}
\end{equation}
where $a_i$, $b_i$ and $c_i$ are known coefficients of unit $i$. If the $i^{th}$ unit has been off prior to start-up, there is a start-off cost \begin{equation}
\psi_{it}=\left\{
\begin{aligned}& d_i,\quad\mbox{if}\,\,\Gamma_i^{down}\le\tau_{it}^{off}\le\Gamma_i^{down}+f_i\\
& e_i,\quad\mbox{if}\,\,\tau_{it}^{off}>\Gamma_i^{down}+f_i\end{aligned}
\right.
\end{equation}
where $d_i$, $e_i$, $f_i$ and $\Gamma_i^{down}$ are the hot start cost, cold start cost, cold start time and minimum down time of unit $i$, respectively. $\tau_{it}^{off}$, the continuously off time of unit $i$, is determined by
\begin{equation}
\tau_{it}^{off}=\left\{ \begin{aligned}&0,& \mbox{if}&\,\,u_{it}=1\\
&1,& \mbox{if}&\,\,u_{it}=0,\,t=1 \mbox{ and } \sigma_i>0\\
&1-\sigma_i,& \mbox{if}& \,\,u_{it}=0,\,t=1\mbox{ and } \sigma_i<0\\
&1+\tau_{i,t-1}^{off},& \mbox{if} &\,\,u_{it}=1 \mbox{ and } t>1\\
 \end{aligned}
\right.
\end{equation}
where $\sigma_i$ is the initial status of unit $i$, which shows for how long the unit was on/off prior to the start of the time horizon.
\subsection{Constraints in UCP}
The minimization of the total production cost is subject to the following constraints.
\begin{description}
\item[Power balance constraints:] The total generated at time $t$ must meet the power demand at that time instant, i.e.,
    \begin{equation}
    \sum_{i=1}^Nu_{it}P_{it}=D_t,\quad t=1,2,\dots,T
    \end{equation}
    where $D_t$ is  the power demand at time $t$. Practically, it is hardly possible to exactly meet the power demand, an error $\epsilon$ is allowed for the generated power, i.e.,
    \begin{equation}
    \left|\dfrac{\sum_{i=1}^Nu_{it}P_{it}}{D_t}-1\right|\le\epsilon,\quad t=1,2,\dots,T.
    \end{equation}
\item[Spinning reserve constraints:] Due to possible outages of equipments, it is necessary for power systems to satisfy the spinning reserve constraints. Thus, the sum of the maximum power generating capacities of all committed units should be greater than or equal to the power demand plus the minimum spinning reserve requirement, i.e.,
    \begin{equation}
    \sum_{i=1}^Nu_{it}P_{i}^{max}\ge D_t+R_t,\quad t=1,2,\dots,T
    \end{equation}
    where $P_i^{max}$ is the maximum power generating capacity of unit $i$, and $R_t$ is the minimum spinning reserve requirement at time $t$.
\item[Minimum up time constraints:] If unit $i$ is on at time $t$ and switched off at time $t+1$, the continuous up time $\tau_{it}^{on}$ should be greater than or equal to the minimum up time $\Gamma_i^{up}$ of unit $i$, i.e.,
    \begin{equation}
    \tau_{it}^{on}\ge\Gamma_i^{up},\quad\mbox{if } u_{it}=1, u_{i,t+1}=0 \mbox{ and }t<T,\quad i=1,\dots,N
    \end{equation}
    where the continuously up time is
    \begin{equation}
\tau_{it}^{on}=\left\{ \begin{aligned}&0,& \mbox{if}&\,\,u_{it}=0\\
&1,& \mbox{if}&\,\,u_{it}=1,\,t=1 \mbox{ and } \sigma_i<0\\
&1+\sigma_i,& \mbox{if}& \,\,u_{it}=1,\,t=1\mbox{ and } \sigma_i>0\\
&1+\tau_{i,t-1}^{on},& \mbox{if} &\,\,u_{it}=1 \mbox{ and } t>1.\\
\end{aligned}
\right.
\end{equation}
\item[Minimum down time constraints:]If unit $i$ is off at time $t$ and switched on at time $t+1$, the continuous up time $\tau_{it}^{off}$ should be greater than or equal to the minimum off time $\Gamma_i^{down}$ of unit $i$, i.e.,
    \begin{equation}
    \tau_{it}^{off}\ge\Gamma_i^{down},\quad\mbox{if } u_{it}=0, u_{i,t+1}=1 \mbox{ and }t<T,\quad i=1,\dots,N
    \end{equation}
\item[Range of generated power:] The generated power of a unit is limited in an interval, i.e.,
    \begin{equation}
    P_i^{min}\le P_{it}\le P_i^{max},\quad i=1,2,\dots,N \mbox{ and } t=1,2,\dots, T
    \end{equation}
    where $P_i^{min}$ and $P_i^{max}$ is the minimum power output and the maximum power output of unit $i$, respectively.
\end{description}

\subsection{Implement of BLDE for UCP}
The optimal commitment of power units in UCP is obtained by combining BLDE with real-coded DE operations. In BLDE, each binary individual represents an on/off scheduling plan of units, accompanied with a real-coded individual representing the specific power outputs of units. When the binary individuals are recombined during the iteration process, the real-coded individuals are recombined via the {\em DE/rand/1} mutation and binary crossover strategies of the real-coded DE. Then, binary individuals and the corresponding real individuals are integrated together for evaluation. If the combined mixed-integer individuals violate the constraints in UCP, they are repaired via the repairing mechanisms proposed in \cite{Datta2012}.

The performance of BLDE is tested via a 10-unit power system, the parameters and forecasted power demands of which are respectively listed in Tab. \ref{tab_6} and Tab. \ref{tab_7}. To fairly compare BLDE with the method proposed in \cite{Datta2012}, we also set the population size to be 100, and the results are compared after 30 independent runs of 2500 iterations, where the scalar factor $F$ is set to be 0.8. The statistical results are listed in Tab. \ref{tab_8}.
\begin{sidewaystable}[htbp]
\renewcommand{\arraystretch}{1.0}
\caption{Unit parameters for the 10-unit power system.}
\label{tab_6}
\centering
\begin{tabular}{cccccccccccc}
\hline\hline
Unit($i$)& $P_{i}^{max}(MW)$ &$P_{i}^{min}(MW)$& $a_i(\$/h)$ & $b_i(\$/MWh)$ & $c_i(\$/MW^2h)$&$d_i(\$)$ & $e_i(\$)$ &$f_i(h)$ & $\Gamma_i^{ip}(h)$ & $\Gamma_i^{down}(h)$ & $\sigma_i$(h)\\
\hline
1&455&150&1000&16.19&0.00048&4500&9000&5&8&8&8\\
2&455&150&970&17.26&0.00031&5000&10000&5&8&8&8\\
3&130&20&700&16.60&0.00200&550&1100&4&5&5&-5\\
4&130&20&680&16.50&0.00211&560&1120&4&5&5&-5\\
5&162&25&450&19.70&0.00398&900&1800&4&6&6&-6\\
6&80&20&370&22.26&0.00712&170&340&2&3&3&-3\\
7&85&25&480&27.74&0.00079&260&520&2&3&3&-3\\
8&55&10&660&25.92&0.00413&30&60&0&1&1&-1\\
9&55&10&665&27.27&0.00222&30&60&0&1&1&-1\\
10&55&10&670&27.79&0.00173&30&60&0&1&1&-1\\
\hline
\end{tabular}

\end{sidewaystable}

\begin{table}[htbp]
\renewcommand{\arraystretch}{1.0}
\caption{Forecasted power demands for the 10-unit system over 14-h time horizon.}
\label{tab_7}
\centering
\begin{tabular}{ccccccccccccc}
\hline
Hour& 1& 2 &3&4&5&6&7&8&9&10&11&12\\
Demand (MW)&700&700&850&950&1000&1100&1150&1200&1300&1400&1450&1500\\
&&&&&&&&&&&&\\
Hour&13&14&15&16&17&18&19&20&21&22&23&24\\
Demand (MW)&1400&1300&1200&1050&1000&1100&1200&1400&1300&1100&900&800\\
\hline
\end{tabular}

\end{table}

\begin{table}[htbp]
\renewcommand{\arraystretch}{1.0}
\caption{Results comparison between BLDE and BRCDE\cite{Datta2012} for the 10-unit power system.``-'' means that the corresponding item was not presented in the literature.}
\label{tab_8}
\centering
\begin{tabular}{cccccc}
\hline\hline
Method &Power balance error $\epsilon$& Best cost &Average Cost &Worst Cost & Standard deviation\\
\hline
{BRCDE}& 0.0\%&563938 &-&-&-\\
&0.1\%&563446&563514&563563&30\\
&0.5\%&561876&-&-&-\\
&1\%&559357&-&-&-\\
{BLDE}& 0.0\%&563977 &564005&564088&24\\
&0.1\%&563552&563636&563745&49\\
&0.5\%&561677&561847&-&50\\
&1\%&559155&559207&559426&48\\

\hline
\end{tabular}

\end{table}
The comparison results show that when the power balance error $\epsilon$ is small, performance of BLDE is a bit worse than that of the binary-real-coded differential evolution (BRCDE) algorithm proposed in \cite{Datta2012}. However, when the power balance is relaxed to a relatively great extent, BLDE outperform BRDE for UCP of the 10-unit power system. The reason could be that crossover operation for real variables is not appropriately regulated for UCP, and accordingly, simultaneous variations on all real variables usually lead to violations of constraints. Thus, BLDE can only outperforms BRCDE when the constraints are relaxed greatly.

\section{Discussions}
In this paper, we propose a BLDE algorithm appropriately incorporating the mutation strategy of binary DE and the learning mechanism of binary PSO. For majority of the selected benchmark problems, BLDE can outperform the compared algorithms, which indicate that BLDE is competitive to the compared algorithms. However, statistical test results show that BPSO performs better than BLDE on $P_5$ and $P_7$, AQDE is more efficient for $P_{8-1}$, and binDE obtains better results on $P_{8-1}$, $P_{8-12}$ as well as $P_{8-3}$. When generating a candidate solution, BLDE first initiate it as the winner of two obtained solutions, and then, regulate it by learning from the best individual in the population. This strategy simultaneously incorporates the synchronously changing strategy and the bitwise mutation strategy of candidate generation. Thus, BLDE can performs well on most of the high-dimensional benchmark problems. However, when BLDE is employed to solve $P_5$ and $P_7$, the global optimal solutions of which are easy to be locate, it performs worse than BPSO; meanwhile, when it is implemented to solve the low-dimensional problems $P_{8-1}$, $P_{8-2}$ and $P_{8-3}$, the local optimal solutions of which are irregularly distributed in the feasible regions, it cannot perform better than binDE.

\section{Conclusions}
Generally, the proposed BLDE is competitive to the existing binary evolutionary algorithms. However, its performance can  been improved. Thus, future work will focus on designing an adaptive strategy appropriately managing the synchronously changing strategy and the bitwise mutation strategy employed in BLDE. Meanwhile, we will try to further improve its performances on mixed-integer optimization problems by efficiently incorporate it with real-coded recombination strategies.

\section*{Acknowledgements}
This work was partially supported by the Natural Science Foundation of China under Grants 51039005, 61173060 and 61303028, as well as the Fundamental Research Funds for the Central Universities (WUT: 2013-Ia-001).


\begin{thebibliography}{00}
\bibitem{Banks2007}
Banks A., Vincent J. and Anyakoha C., A review of particle swarm optimization, Part I: background and development. Natural Computing, 6(4): 467-484, 2007.

\bibitem{Banks2008}
Banks A., Vincent J. and Anyakoha C., A review of particle swarm optimization, Part II: hybridisation, combinatorial, multicriterial and constrained optimization, and indicative applications. Natural Computing, 7(1): 109-124, 2008.

\bibitem{Das2005}
Das S., Konar A. and Chakraborty U. K., Improving particle swarm optimization with differentially perturbed velocity. In {\em Proc. of 2005 Conference on Genetic and Evolutionary Computation (GECCO'05)}, ACM, 2005, pp. 177-184.

\bibitem{Das2011}
Das S. and Suganthan P. N., Differential evolution: a survey of the state-of-the-art. IEEE Transactions on Evolutionary Computation, 15(1):4-31, 2011.

\bibitem{Datta2012}
Datta D. and Dutta S., A binary-real-coded differential evolution for unit commitment problem.  Electrical Power and Energy Systems, 42(1): 517-524, 2012.

\bibitem{Eberhart2001}
Eberhart R. C. and Shi Y., Particle swarm optimizaiton: developments, applications and resources. In {\em Proc. of 2001 IEEE Conference on Evolutionary Computation (CEC 2001)}, IEEE, 2001, pp.81-86.

\bibitem{Engelbrecht2007}
Engelbrecht A. P. and Pampar\'{a}, Binary differential evolution strategies. In {\em Proc. of 2007 IEEE Congress on Evolutionary Computation}, IEEE, 2007, pp.1942-1947.

\bibitem{Gibbons2003}
Gibbons, J. and Chakraborti, S., Nonparametric Statistical Inference (the Fifth Edition). Taylor and Francis, 2011.

\bibitem{Gong2007}
Gong, T. and Tuson, A.L., Differential evolution for binary encoding.
 In {\it Soft Computing in Industrial Applications}, Springer, 2007, pp. 251-262.

\bibitem{He2007}
He X. and Han L., A novel binary differential evolution algorithm based on artificial immune system. In {\em Proc. of 2007 IEEE Congress on Evolutionary Computation}, IEEE, 2007, pp. 2267-2272.

\bibitem{Horn1994}
Horn J., Goldber D. E. and Deb K., Long path problems. In {\em Proc. of 1994 International Conference on Parallel Problem Solving from Nature (PPSN III)}, Springer, 1994, pp. 149-158.

\bibitem{Hota2010}
Hota  A. R. and Pat, A., An adaptive quantum-inspired differential evolution algorithm for 0-1 knapsack problem. In {\em Proc. of 2010 Second World Congress on Nature and Biologically Inspired Computing (NaBIC)}, IEEE, 2010, pp.703-708.

\bibitem{Kashan2012}
Kashan, M.H., Nahavandi, N., \& Kashan, A.H. (2012). DisABC: A new artificial bee colony algorithm for binary optimization.  Applied Soft Computing, 12(1): 342-352, 2012.

\bibitem{Kashan2013}
Kashan M. H., Kashan A. H. and Nahavandi N., A novel differential evolution algorithm for binary optimizatoin. Computational Optimization and Applications, 55(2): 481-513, 2013.

\bibitem{Kennedy1995}
Kennedy, J., \& Eberhart, R.C. (1995). Particle swarm optimization.
In {\em Proc. of 1995 IEEE International Conference on Neural Networks}, IEEE, 1995, pp. 1942-1948.

\bibitem{Kennedy1997}
Kennedy, J., \& Eberhart, R.C. (1997). A  discrete  binary  version  of  the particle  swarm  algorithm.
In {\em Proc. of  IEEE  International  Conference  on Systems, Man, and Cybernetics}, IEEE, 1997, pp. 4104-4108.

\bibitem{Kennedy2010}
Kennedy J., Particle swarm optimizaiton. In {\em Encyclopedia of Machine Learning}, Springer, 2010, pp.760-766.

\bibitem{Moore2006}
Moore P. W. and Venayagamoorthy G. K., Evolving digital circuit using hybrid particle swarm optimizaiton and differential evolution. Interational Journal of Neural Systems, 16(3): 163-177, 2006.

\bibitem{Omran2009}
Omran M. G. H., Engelbrecht A. P. and Salman A., Bare bones differential evolution. European Journal of Operational Research, 196(1): 128-139, 2009.

\bibitem{Pampara2005}
Pampara, G., Franken, N.,\& Engelbrecht, A.P., Combining particle swarm optimisation with angle modulation to solve binary problems. In {\em Proc. of 2005 IEEE Congress on Evolutionary Computation}, IEEE, 2005, pp. 89-96.

\bibitem{Pampara2006}
Pampara G., Engelbrecht A. P. and Franken, N., Binary differential evolution.
In {\em Proc. of 2006 IEEE Congress on Evolutionary Computation}, IEEE, 2006, pp. 1873-1879.

\bibitem{Ponsich2011}
Ponsich A. and Coello Coello C. A., Differential evolution performances for the solution of mixed-integer constraned process engineering problems. Appliced Soft Computing, 11: 399-409, 2011.

\bibitem{Poli2007}
Poli, R., Kennedy, J., and Blackwell, T., Particle swarm optimization: An Overview. Swarm Intelligence, 1: 33-57, 2007.

\bibitem{Price2005}
Price, K., Storn, R., and Lampinen, J., Differential Evolution: A Practical Approach
to Global Optimization. Springer, 2005.

\bibitem{Rekanos2008}
Rekanos I. T., Shape reconstruction of a perfectly conducting scatterer using differential evolution and particle swarm optimization. IEEE Transactions on Geoscience and Remote Sensing, 46(7): 1967-1974, 2008.

\bibitem{Storn1997}
Storn, R. and Price, K. (1997). Differential evolution-a simple and efficient adaptive scheme
for global optimization over continuous spaces.  Journal of Global Optimization, 11: 341-359, 1997.

\bibitem{Uyar2005}
Uyar, \c{S}., \& Eryi\u{g}it, G., Improvements to penalty-based evolutionary algorithms for the multi-dimensional knapsack problem using a gene-based adaptive mutation approach. In {\em Proc. of 2005 Conference on Genetic and evolutionary computation (GECCO '05)}, ACM, 2005, pp. 1257-1264.

\bibitem{Vesterstrom2004}
Vesterstrom J. and Thomsen R., A comparative study of differential evolution, particle swarm optimization and evolutionary algorithms on numerical benchmark problems. In {\em Proc. of 2004 IEEE Conference on Evolutionary COmputation (CEC'04)}, IEEE, 2004, pp. 1980-1987.

\bibitem{Wang2012}
Wang L., Fu X. P., Mao Y. F., Menhas M. I. and Fei M. R.,  A novel modified binary differential evolution algorithm and its applications.  Neurocomputing, 98: 55-75, 2012.

\bibitem{Wu2010}
Wu C-Y. and Tseng K-Y., Topology optimization of structures using binary differential evolution. Structural and Multidisciplinary Optimization, 42: 939-953, 2010.

\bibitem{Website}
\url{http://www.zib.de/index.php?id=921&no_cache=1&L=0&cHash=fbd4ff9555f8714ac6238261e3963432&type=98}

\bibitem{Yang2008}
Yang Q., A comparative study of discrete differential evolution on binary constraint satisfaction problems. In {\em Proc. of 2008 IEEE Congress on Evolutionary Computation}. IEEE, 2008, pp. 330-335.
\end{thebibliography}
\end{document}